\documentclass{article}

\usepackage{microtype}
\usepackage{stfloats}
\usepackage{placeins}
\usepackage{graphicx}
\graphicspath{{../figures/}}
\usepackage{booktabs}
\usepackage{float}

\usepackage{xcolor}
\usepackage{hyperref}

\usepackage{latexml}
\iflatexml\usepackage{authblk}\fi

\usepackage[preprint]{icml2026}
\hypersetup{
  pdftitle={SFT Overtraining Predicts Rank Inversion via Entropy Collapse Under RLVR},
  pdfauthor={Siddharth Aphale, Kelly Liu},
  pdfsubject={Accepted at the Deep Learning for Code (DL4C) Workshop at ICML 2026},
  pdfkeywords={reinforcement learning, supervised fine tuning, GRPO, entropy collapse, code generation, rank inversion},
}

\usepackage{amsmath}
\usepackage{amssymb}
\usepackage{amsthm}
\usepackage{enumitem}

\newtheorem{proposition}{Proposition}
\newtheorem{corollary}{Corollary}
\newtheorem{remark}{Remark}
\newtheorem{definition}{Definition}
\newtheorem{assumption}{Assumption}

\let\oldthebibliography\thebibliography
\renewcommand{\thebibliography}[1]{%
  \oldthebibliography{#1}%
  \hbadness=3000}

\icmltitlerunning{SFT Overtraining Predicts Rank Inversion via Entropy Collapse Under RLVR}

\begin{document}

\iflatexml
\title{SFT Overtraining Predicts Rank Inversion via Entropy Collapse Under RLVR}
\author[1]{Siddharth Aphale}
\author[1]{Kelly Liu}
\affil[1]{Stanford University, Stanford, CA, USA}
\maketitle
\begin{center}
Correspondence to: Siddharth Aphale {\tt <saphale@stanford.edu>}.\\
\textit{Accepted at the Deep Learning for Code (DL4C) Workshop at ICML 2026.}
\end{center}
\else
\twocolumn[
\icmltitle{SFT Overtraining Predicts Rank Inversion via Entropy Collapse Under RLVR}

\begin{icmlauthorlist}
\icmlauthor{Siddharth Aphale}{stanford}
\icmlauthor{Kelly Liu}{stanford}
\end{icmlauthorlist}

\icmlaffiliation{stanford}{Stanford University, Stanford, CA, USA}

\icmlcorrespondingauthor{Siddharth Aphale}{saphale@stanford.edu}

\icmlkeywords{reinforcement learning, supervised fine tuning, GRPO,
              entropy collapse, code generation, rank inversion}

\vskip 0.3in
]

\printAffiliationsAndNotice{\textit{Accepted at the Deep Learning for Code
(DL4C) Workshop at ICML 2026.}}
\fi

\begin{abstract}
  The standard heuristic of selecting the SFT checkpoint with the
  highest pass@1 for GRPO can fail when SFT compresses the rollout
  distribution. For binary rewards, the expected within group
  advantage variance is $p(1{-}p)(g{-}1)/g$; when early GRPO drives
  $p$ below $p^*(g)$, most groups have identical rewards and provide
  no group relative signal. We study SFT depth ladders for
  Qwen2.5-Coder-3B and DeepSeek-Coder-6.7B. We test
  Qwen2.5-Coder-3B across five depths and three seeds, and
  DeepSeek-Coder-6.7B across four matched depths and three seeds.
  On Qwen, pre RL pass@1 rises with SFT depth, but peak GRPO
  pass@10 falls from $0.806$ to $0.481$ (3 seed mean, $n{=}20$);
  pre RL entropy is positively associated with the GRPO outcome
  ($\rho{=}{+}0.69$). On DeepSeek, pass@1 remains far above
  $p^*(8){=}0.083$, and GRPO outcomes compress rather than invert.
  A two stage diagnostic, combining pre RL entropy triage with an
  early GRPO entropy monitor, flags high risk checkpoints and can stop
  failing runs early. Simple KL to reference regularisation and label
  smoothing variants do not rescue the collapsed Qwen checkpoint in
  our setting, suggesting the failure is not a trivial GRPO
  hyperparameter artefact.
\end{abstract}

\vspace{-0.38in}
\begin{quote}
  \small\url{https://github.com/siddharthaphale/entropy-collapse-rlvr}
\end{quote}

\section{Introduction}

The standard post training recipe for code generation applies
supervised fine tuning (SFT) then reinforcement learning with
verifiable rewards (RLVR)~\citep{shao2024deepseekmath,
guo2024deepseekcoder}, selecting the highest scoring SFT
checkpoint for RL. This rule is increasingly questioned:
prolonged SFT memorises rather than
generalises~\citep{chu2025sft}, RLVR narrows the reasoning
boundary~\citep{yue2025rlreasoning}, and pass@1 alone is a
weak predictor of post RL outcome at
scale~\citep{kang2025quagmires}. We show this criterion is
misleading when SFT overtraining is associated with the policy
entering entropy collapse:
across an SFT depth ladder on Qwen2.5-Coder-3B-Base, peak
GRPO pass@10 falls monotonically from $0.806$ to $0.481$
(3 seed mean) while pre RL pass@1 \emph{rises}, and the
highest pass@1 checkpoint loses to shallower counterparts
in every seed. The mechanism: SFT overtraining compresses
output diversity, extinguishing the gradient signal GRPO
requires. A parallel ladder on DeepSeek-Coder-6.7B-Base
provides a regime boundary (\S\ref{sec:rank_inversion}).

Prior work on SFT to RL transitions has focused on data
composition~\citep{chu2025sft} or offline versus online distribution
mismatch~\citep{zhang2026pear} rather than checkpoint specific
selection at fixed compute. Aggregate predictors do not
diagnose individual failures. Pass@64 combined with
generalisation loss predicts post RL outcome at
$R^2{=}0.94$~\citep{kang2025quagmires}, far above pass@1
alone, but two checkpoints with identical pass@64 can carry
radically different entropy profiles. Pass@$k$ at large $k$
measures whether problems are ever solvable rather than how
reliably; the checkpoint specific variance question is by
construction outside its scope~\citep{dragoi2025beyond,
yue2025rlreasoning}. Greedy pass@1 ($T{=}0$) measures
capability at the wrong temperature; the rollout distribution
at $T{=}1.0$, where GRPO operates, may be far more compressed
than greedy behaviour suggests.

We make three bounded contributions. First, under binary rewards
the within group advantage variance is exactly
$\mathbb{E}[\sigma_G^2] = p(1{-}p)(g{-}1)/g$, yielding a
majority degenerate threshold $p^*(g)$ at which GRPO's
group relative signal structurally collapses
(Proposition~\ref{prop:threshold}). Second, on
Qwen2.5-Coder-3B-Base, we identify a rank inversion case study:
SFT depth raises pre RL pass@1 while lowering peak GRPO pass@10,
and the failure is explained by entropy collapse and reward variance
collapse. Third, on DeepSeek-Coder-6.7B-Base, pass@1 remains far
above $p^*(8)$ and GRPO ranks compress rather than invert, providing
a contrastive safe regime. The exact entropy cutoffs used for
triage are calibrated to the Qwen ladder; the model agnostic claim is
the checkpoint ordering and risk signal, not universal thresholds.
Two natural interventions, a KL penalty and label smoothing, fail to
rescue collapsed checkpoints in our setting; the failure appears
upstream of the tested GRPO variants.
We validate on three random seeds. Section~\ref{sec:related}
reviews related work; \S\ref{sec:methods} the experimental
setup; \S\ref{sec:results} the rank inversion, mechanism,
diagnostic, cross model validation, and interventions;
formal derivations in Appendix~A.3.

\section{Related Work}
\label{sec:related}

\paragraph{SFT overtraining constrains RL.}\hbadness=1300\relax
SFT memorises solutions while RL generalises across
them~\citep{chu2025sft}. In \citet{kang2025quagmires}'s cross model
study, pass@1 is a far weaker predictor of post RL outcome than
pass@64 with generalisation loss, and extending SFT beyond two epochs
degrades GRPO outcomes across hundreds of models. In our within family
ladder pass@1 correlates \emph{negatively} with post RL peak pass@10
($\rho{=}{-}0.75$, \S\ref{sec:rank_inversion}).  \citet{zhang2026pear} independently demonstrate rank inversion in mathematical reasoning,
attributing it to an offline versus online distribution mismatch and proposing PEAR as a remedy;
our entropy analysis provides the mechanistic account via Proposition~\ref{prop:variance}
and identifies the failure at the SFT stage rather than the SFT loss objective.

\paragraph{Limitations of pass@$k$ as a readiness proxy.}
Large-$k$ pass@$k$ measures whether a problem is \emph{ever} solvable
rather than \emph{how reliably}~\citep{dragoi2025beyond}. Because RLVR
narrows the reasoning boundary rather than expanding it, pass@$k$
captures the capability ceiling but not whether RL compression will
succeed~\citep{yue2025rlreasoning}. Proposed alternatives that target
output spread directly, like the diversity ratio $\Delta_k$, fall short
at low pass@1 (\S\ref{sec:entropy_collapse}).
Greedy pass@1 ($T{=}0$) fails for a distinct but related reason: it
measures capability at the wrong temperature. GRPO operates at $T{=}1.0$,
where a checkpoint's output distribution may be far more compressed than
its greedy behaviour suggests.

\paragraph{Entropy as the binding constraint.}
\citet{cui2025entropy} show that the RL performance ceiling is
determined by a model's initial entropy, and propose Clip-Cov to
address collapse \emph{during} RL. Our diagnostic addresses the prior
stage: whether SFT has already depleted entropy before GRPO begins.
Our pre RL screen complements their Clip-Cov method: we identify which
checkpoints need entropy preservation before training, they sustain it
during training.

\paragraph{RL algorithms and the diversity requirement.}
GRPO's critic free design makes the failure mode maximally legible:
when all rollouts in a group receive identical rewards, the
group relative advantage is zero and the gradient
vanishes~\citep{shao2024deepseekmath}. PPO and RLOO partially mask
this through their baselines; we use GRPO precisely because it makes
entropy collapse observable in the training signal. Training time
entropy preservation methods~\citep{chen2025passk,
walder2025passk} are complementary; our diagnostic identifies which
checkpoints need them before training begins.

\paragraph{Entropy collapse as a symptom of plasticity loss.}
Loss of plasticity (the progressive inability of a network to adapt
to new training signals after extended optimisation) is well
documented in deep RL~\citep{lyle2023plasticity, nikishin2022primacy,
dohare2024plasticity}. Entropy collapse is consistent with this
picture, with the practical advantage that entropy is measurable
pre RL whereas effective rank and dead neuron fraction require
additional instrumentation. Plasticity preserving methods, including periodic
network resets~\citep{nikishin2022primacy} and regularisation toward
initial weights~\citep{kumar2023regenerative}, are complementary:
our diagnostic identifies which checkpoints have already undergone
entropy collapse before RL compute is spent, while those methods
address sustaining plasticity during training.

\section{Methods}
\label{sec:methods}

We isolate SFT duration as the sole independent variable in our
main study: all checkpoints share identical architecture, data,
hyperparameters, and evaluation protocol. Ablations in
\S\ref{sec:mitigation} vary the GRPO KL penalty and apply label
smoothing to the 5.8 epoch checkpoint. The cross model thread on
DeepSeek-Coder-6.7B-Base appears throughout \S\ref{sec:results}
alongside the Qwen ladder and is summarised in Appendix~A.5.

\paragraph{Models.} Two base models are trained under identical
recipes. The primary ladder uses
Qwen2.5-Coder-3B-Base~\citep{qwen2_5_coder}; the cross model
ladder uses DeepSeek-Coder-6.7B-Base~\citep{guo2024deepseekcoder}.
Both are fine tuned in BF16 with LoRA ($r{=}128$, $\alpha{=}128$,
applied to all linear layers plus embeddings). SFT uses AdamW
8 bit, learning rate $1{\times}10^{-5}$, batch size 16, constant
schedule, weight decay $0.001$. A 100 step format warmup teaches
the \texttt{<think>}$\to$code output format before
checkpoint specific training begins.

\paragraph{SFT dataset and checkpoints.} Training uses 5,000
medium difficulty KodCode-V1 problems~\citep{kodcode2025},
decontaminated against HumanEval and MBPP via embedding cosine
similarity (threshold $\tau{=}0.70$, all-MiniLM-L6-v2, dropping $20.8\%$
of the pool). CoT traces were regenerated with Gemini 2.5 Flash targeting
concise reasoning (median 518 tokens) to fit the 2,048 token context
window without truncation. Five checkpoints spanning 1.0 to 5.8 epochs are selected for GRPO on
Qwen (Table~\ref{tab:sft_selection}); we refer to checkpoints by
their SFT epoch value throughout the paper. On
DeepSeek-Coder-6.7B-Base the same recipe yields eight SFT
checkpoints (1.0 to 9.6 epochs). Four matched checkpoints
(1.0, 1.9, 3.8, and 5.8 epochs) are GRPO-trained on three
seeds.\footnote{Seeds 42, 123, 456.}

\paragraph{GRPO dataset.} We construct the GRPO training set from
\texttt{KodCode-V1}~\citep{kodcode2025} problems not used during SFT
training, with an identical recipe across both models. Candidates are
decontaminated in three passes against (i) the SFT training set
($\tau{=}0.75$), (ii) HumanEval+ and MBPP ($\tau{=}0.70$), and (iii)
the frozen 40 problem deep eval subset ($\tau{=}0.70$), all via
all-MiniLM-L6-v2 cosine similarity. For each surviving candidate we
run 16 stochastic rollouts ($T{=}1.0$, top $p{=}0.95$) from a
calibration checkpoint at the middle of the GRPO ladder, count
passing rollouts via pytest, and retain problems with
pass\_count $\in [1, 14]$, excluding both unsolvable problems
(pass\_count${=}0$) and saturated ones (pass\_count${\geq}15$). This
``calibration band'' ensures the calibration model has signal on
every kept problem but is not saturated, so policy gradients carry
variance for all GRPO arms. The calibration checkpoint is the 2.9 epoch SFT for Qwen
($\to 1{,}096$ records) and the 3.8 epoch SFT for DeepSeek
($\to 1{,}104$ records).

\paragraph{GRPO training.} Each checkpoint undergoes identical 400 step
GRPO~\citep{shao2024deepseekmath} with the DAPO
variant~\citep{dapo2025} (loss in Appendix~A.2):
group size $g{=}8$, $\beta{=}0$ (no KL penalty),
$\varepsilon_{\mathrm{high}}{=}0.28$, learning rate
$1{\times}10^{-6}$, gradient clip $0.1$. The same LoRA adapter from SFT
is continued, eliminating adapter capacity as a confound.

\paragraph{Reward and evaluation.} We use binary correctness reward
($+2.0$ if all unit tests pass, $0$ otherwise). Format rewards are
excluded to prevent masking of the capability signal. Evaluation runs every
50 GRPO steps on a frozen 40 problem HumanEval+
subset~\citep{evalplus2023} ($n{=}20$, $T{=}1.0$), reporting
pass@$\{1,10\}$ via the unbiased estimator
of~\citet{humaneval_2021} (Appendix~A.1).
The 40 problem IDs are fixed across runs and preserve the
HumanEval+ difficulty distribution.
Entropy is probed every 10 steps on a 5 problem subset as mean
next token entropy $-\sum_v p_v \log p_v$ in nats, measured at the
generation start point via a single forward pass on prompt tokens only
(no completions generated).
The in training probe takes the per step minimum across these 5
health check problems while the pre RL probe (Table~\ref{tab:sft_selection})
averages over 40 problems; the orderings agree.

\section{Results}
\label{sec:results}

Selecting the highest scoring SFT checkpoint for GRPO, the standard
practice, picks the worst GRPO initialiser in every seed at every
depth, by a margin of $0.325$ in peak pass@10
($0.806$ at 1.0 epochs vs.\ $0.481$ at 5.8 epochs;
Table~\ref{tab:sft_selection}).
This failure is mechanical: SFT overtraining compresses the output
distribution and leaves deeper checkpoints at high risk of crossing the
gradient vanishing threshold $p^*(8)$ during early GRPO.
\S\ref{sec:rank_inversion} establishes the
rank inversion, \S\ref{sec:entropy_collapse} develops entropy collapse
as its mechanism, \S\ref{sec:diagnostic} operationalises this as a
two stage diagnostic, and \S\ref{sec:mitigation} tests simple
post hoc GRPO and SFT interventions.

\begin{table*}[t]
\centering
\caption{SFT ladder summary. Pre RL entropy and pass@1
($T{=}1.0$, $n{=}128$); GRPO peak pass@10 = max over canonical
50 step intervals of the 3 seed mean ($n{=}20$, in training).
Mean $\pm$ half range across 3 seeds. Absolute pass@10 levels
are not comparable across models (different family, capability
ceiling, completion length); within ladder ordering is the
comparison of interest. Per arm tables: Appendix~A.5.}
\label{tab:sft_selection}
\small
\begin{tabular}{rccc}
\toprule
SFT epochs & Entropy (nats) & Pre RL pass@1 ($n{=}128$) & GRPO Peak pass@10 ($n{=}20$) \\
\midrule
\multicolumn{4}{l}{\textit{Qwen2.5-Coder-3B-Base}} \\
1.0 & $0.227 \pm 0.045$ & $0.151 \pm 0.012$ & $0.806 \pm 0.028$ \\
1.9 & $0.163 \pm 0.008$ & $0.163 \pm 0.008$ & $0.750 \pm 0.079$ \\
2.9 & $0.156 \pm 0.026$ & $0.159 \pm 0.008$ & $0.671 \pm 0.031$ \\
3.8 & $0.171 \pm 0.029$ & $0.173 \pm 0.008$ & $0.608 \pm 0.026$ \\
5.8 & $0.120 \pm 0.019$ & $0.187 \pm 0.004$ & $0.481 \pm 0.044$ \\
\midrule
\multicolumn{4}{l}{\textit{DeepSeek-Coder-6.7B-Base}} \\
1.0 & $0.399 \pm 0.043$ & $0.351 \pm 0.019$ & $0.861 \pm 0.015$ \\
1.9 & $0.274 \pm 0.012$ & $0.383 \pm 0.005$ & $0.891 \pm 0.030$ \\
3.8 & $0.285 \pm 0.030$ & $0.413 \pm 0.013$ & $0.881 \pm 0.026$ \\
5.8 & $0.185 \pm 0.022$ & $0.413 \pm 0.009$ & $0.888 \pm 0.024$ \\
\bottomrule
\end{tabular}
\end{table*}

\begin{figure}[H]
  \centering
  \includegraphics[width=\columnwidth]{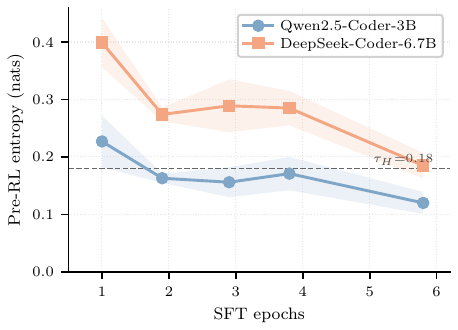}
  \caption{\textbf{Pre RL entropy across the SFT ladder.} Mean
    next token entropy on a 40 problem HumanEval+ probe ($T{=}1.0$,
    $n{=}128$); lines = 3 seed mean, bands = per checkpoint min/max.
    Dashed: Stage 1 threshold $\tau_H{=}0.18$~nats.}
  \label{fig:sft_curve}
\end{figure}

\subsection{Rank Inversion and Rank Compression Under GRPO}
\label{sec:rank_inversion}

The inversion is present from the first evaluation at step~50 and
holds without exception. The 1.0 epoch policy peaks at pass@10 of
$0.824$ at step~250, while the 5.8 epoch policy peaks at $0.527$
at step~50 and never recovers, a $2.37\times$ final gap. Notably,
the peak step itself contracts monotonically with SFT depth, from
step~250 to step~50, indicating that checkpoints with collapsed entropy
exhaust their learning signal earlier rather than learning more
efficiently. Pass@1 shows the same inversion: the 1.0 epoch policy
sustains $0.276$ to $0.335$ throughout training while the 5.8 epoch
policy collapses to $0.020$ by step~200. The full checkpoint specific
values appear in Table~\ref{tab:sft_selection}; the pattern holds
across all three seeds, with pre RL entropy and pass@1 predicting
peak pass@10 at Spearman $\rho{=}{+}0.69$ and $-0.75$ respectively
($p{<}0.01$, seed demeaned; Appendix~A.6). The implication for
practice is direct: the standard rule of selecting the highest
post SFT pass@1 checkpoint for RL systematically identifies the
worst GRPO initialiser in the entropy collapse regime.

The same protocol on DeepSeek-Coder-6.7B-Base produces the
opposite regime, rank \emph{compression}: post RL pass@10 bunches
in $[0.841, 0.884]$ (3 seed mean, $n{=}128$) and pre RL to
post RL ranks coincide exactly (Spearman $\rho{=}{+}1.00$,
vs.\ $\rho{=}{-}0.75$ on Qwen, $p{<}0.01$, $n{=}15$;
Figure~\ref{fig:rank_inversion}b, Appendix~A.6). Every DeepSeek arm
improves under GRPO ($\Delta$pass@10 $\in [+0.003, +0.024]$,
Table~\ref{tab:deepeval_crossmodel}). DeepSeek thus serves as a
contrastive validation of the threshold prediction, not a
replication of the collapse mechanism: it occupies the safe side of
$p^*(8)$ where the predicted failure does not occur.
\S\ref{sec:entropy_collapse} shows why the entropy collapse regime
is the boundary between the two outcomes.

\begin{figure*}[t]
  \centering
  \includegraphics[width=\textwidth]{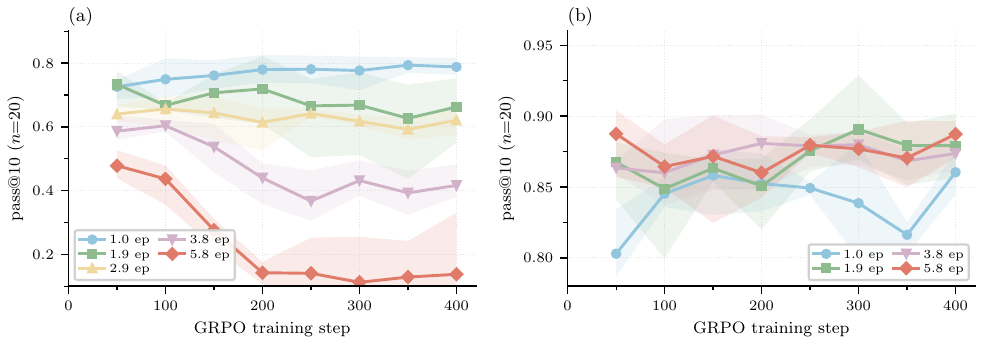}
  \caption{\textbf{Rank inversion vs.\ rank compression}
    ($n{=}20$, $T{=}1.0$, in training pass@10). Lines: 3 seed
    mean; bands: min/max. (a)~Qwen2.5-Coder-3B-Base: curves fan
    out with SFT depth, post RL ordering inverts pre RL.
    (b)~DeepSeek-Coder-6.7B-Base: four matched three seed depths
    bunch in $[0.80,0.91]$. Pre and post RL ranks coincide exactly
    (Spearman $\rho{=}{+}1.00$). Y axes differ between panels;
    absolute levels not comparable across models.}
  \label{fig:rank_inversion}
\end{figure*}

\subsection{Entropy Collapse as Mechanism}
\label{sec:entropy_collapse}

At the SFT level, pre RL entropy falls from $0.227$ nats at 1.0
epochs to $0.120$ at 5.8 epochs (3 seed mean,
Table~\ref{tab:sft_selection}), with the seed 42 nonmonotonicity
at 3.8 epochs ($0.192$ nats) discussed below
(Figure~\ref{fig:entropy_collapse}).
Yet pre RL pass@64 varies within a narrow 3.5 point band
($0.897$ to $0.932$, $n{=}128$, 3 seed mean), indicating a similar
capability ceiling across the ladder: the checkpoints differ not in
\emph{whether} they can solve problems but in \emph{how reliably}
they sample solutions~\citep{kang2025quagmires}.
During GRPO, worst token entropy collapses across all checkpoints,
but the severity scales with SFT depth. The shallowest checkpoint
(1.0 epochs) loses $37\%$ of its step 10 entropy by step~400
($0.078 \to 0.049$~nats); the deepest (5.8 epochs) loses $78\%$
($0.018 \to 0.004$~nats), ending the run at $12\times$ lower
entropy than the shallowest (Figure~\ref{fig:entropy_collapse}a).
The nonmonotonicity at 3.8 epochs (seed=42) is discussed below.

\begin{figure*}[t]
  \centering
  \includegraphics[width=\textwidth]{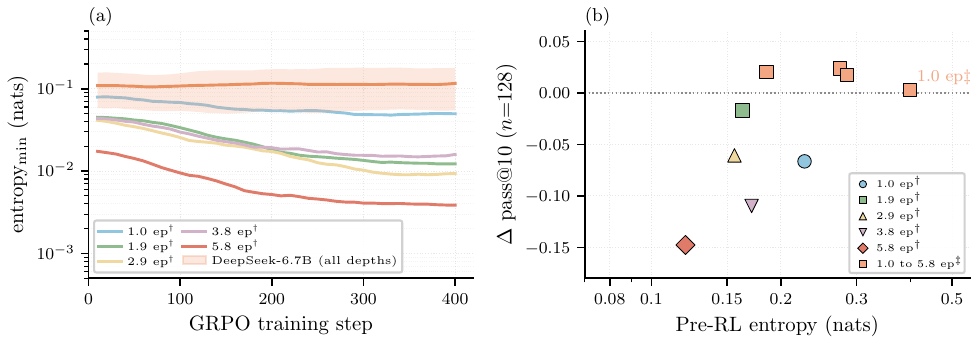}
  \caption{\textbf{Entropy collapse and GRPO outcomes.}
    (a)~Per step worst token entropy ($n{=}5$ probe, log scale,
    3 seed mean): Qwen$^{\dagger}$ collapses with SFT depth;
    DeepSeek$^{\ddagger}$ band is the per step min/max across
    four GRPO checkpoints. (b)~Pre RL entropy vs.\ $\Delta$pass@10
    (post RL $-$ pre RL deep eval, $n{=}128$, 3 seed mean): Qwen
    inverts ($\Delta{<}0$), DeepSeek compresses ($\Delta{\geq}0$).
    $^{\dagger}$Qwen-3B;
    $^{\ddagger}$DeepSeek-6.7B.}
  \label{fig:entropy_collapse}
\end{figure*}

\paragraph{Pass@10 trajectory (seed=42, $n{=}20$).}
For the 5.8 epoch policy, pass@10 drops sharply between steps~50 and
200 (from $0.527$ to $0.176$, a $67\%$ decline), with pass@1 falling
from $0.085$ to $0.020$ over the same window. After step~200 the policy
partially recovers to pass@10 of $0.331$ by step~400 but never regains
its step 50 value. By contrast, the 1.0 epoch policy improves from
$0.748$ at step~50 to a peak of $0.824$ at step~250, then stabilises
within $5\%$ of its peak through step~400.  The entropy collapse
mechanism predicts this pattern and the per step entropy trajectories
confirm it: the 5.8 epoch worst token entropy drops from $0.013$ nats
at step~50 to $0.005$ at step~200 (a $62\%$ decline), the same window
in which pass@10 collapses.  The 1.0 epoch entropy stays above
$0.040$ nats throughout (Figure~\ref{fig:entropy_collapse}a).

\paragraph{Interpretation.}
Rather than equalising the pre RL entropy gap, GRPO amplifies it:
the worst token entropy ratio between the 1.0 epoch and 5.8 epoch
checkpoints grows from $4.3\times$ at step~10 to $12.3\times$ by
step~400 (3 seed mean). The diversity ratio $\Delta_k$ nevertheless
stays above $0.84$ throughout, confirming that the collapse is in
\emph{sampling probability} (how often the model generates a correct
solution) rather than in the upper bound on solvable problems.

\paragraph{Mechanistic account: from low entropy to gradient death.}
The data support a self reinforcing collapse cycle with five identifiable
stages:
\begin{enumerate}[leftmargin=*,itemsep=1pt,parsep=0pt]
  \item \emph{Low SFT entropy:} Extended SFT compresses the output
    distribution: $0.227$ nats at 1.0 epochs falling to $0.120$ at
    5.8 epochs, 3 seed mean on the 40 problem probe.
  \item \emph{Homogeneous rollouts:} At GRPO temperature $T{=}1.0$, the
    low entropy policy generates near identical completions across the
    group of $g{=}8$, collapsing pass@1 toward~$0$ (e.g.\ $0.020$ for
    the 5.8 epoch policy at step~200, down from $0.085$ at step~50).
  \item \emph{Reward variance collapse:} By
    Proposition~\ref{prop:variance},
    $\mathbb{E}[\sigma_G^2] = p(1{-}p)(g{-}1)/g$. At $p{=}0.020$ this
    is $0.0172$, with most groups degenerate
    (Proposition~\ref{prop:degenerate}), carrying zero gradient.
    The exact closed form of Proposition~\ref{prop:variance} is used
    directly throughout; Remark~\ref{rem:iid} in Appendix~A.3
    quantifies the i.i.d.\ approximation underlying it for
    theoretical completeness.
  \item \emph{Gradient death:} With near zero advantage variance, the
    GRPO update is negligible; the policy cannot move toward correct
    solutions.
  \item \emph{Further collapse:} Without gradient signal the policy
    stagnates; residual optimiser momentum and weight decay further
    erode capability, closing the loop.
\end{enumerate}
This cycle is \emph{self reinforcing}: once entered, each stage amplifies
the next.

\paragraph{Seed level entropy nonmonotonicity at 3.8 epochs.}
Pre RL entropy is broadly consistent with the rank inversion direction
but shows seed specific nonmonotonicity: in seed 42, the 3.8 epoch
checkpoint reads $0.192$~nats, higher than both 1.9 epochs ($0.163$)
and 2.9 epochs ($0.137$), yet 3.8 epochs still underperforms both on
peak pass@10 (seed=42: $0.585$ vs.\ $0.765$ and $0.636$).  By 3 seed
mean the ordering is monotone in SFT depth ($0.608$ at 3.8 epochs,
between $0.671$ at 2.9 and $0.481$ at 5.8), so the nonmonotonicity is
a seed level entropy artefact rather than an outcome anomaly.  It motivates
augmenting pre RL entropy triage with an early GRPO entropy monitor
(Stage 2), which tracks collapse rate and does not rely on a single
pre RL snapshot.

\paragraph{Contrastive validation of Proposition~\ref{prop:threshold}.}
Rank compression on DeepSeek (\S\ref{sec:rank_inversion},
Figure~\ref{fig:rank_inversion}b) is what
Proposition~\ref{prop:threshold} predicts: every DeepSeek pre RL
pass@1 ($0.351$ to $0.413$, 3 seed mean) is $\geq 4.2\times$ above
$p^*(8){=}0.083$, so no majority degenerate collapse is expected.
Under identical GRPO, DeepSeek entropy in fact stays above the
gradient death regime ($\mathbb{E}[\sigma_G^2]$ in $0.20$ to $0.21$
throughout, roughly $12\times$ the Qwen 5.8 epoch step 200 value); the
five stage cycle above cannot start. Proposition~\ref{prop:threshold}
therefore separates the observed regimes in a bounded sense: Qwen
starts above $p^*(8)$ pre RL but the deepest checkpoint crosses below
it during early GRPO, while every DeepSeek rung stays
$\geq 4.2\times$ above threshold
(Table~\ref{tab:deepeval_crossmodel}).

\begin{table}[!t]
\centering
\caption{Cross model deep eval pass@10 ($n{=}128$, 3 seed mean).
``vs $p^*(8)$'' = pre RL pass@1 normalised by $p^*(8){=}0.083$.
Comparability caveats as in Table~\ref{tab:sft_selection}.}
\label{tab:deepeval_crossmodel}
\small
\setlength{\tabcolsep}{4pt}
\begin{tabular}{lcccc}
\toprule
Epochs & Pre p@10 & Post p@10 & $\Delta$ p@10 & vs $p^*(8)$ \\
\midrule
\multicolumn{5}{l}{\textit{Qwen2.5-Coder-3B-Base}} \\
1.0 & 0.663 & 0.597 & $-0.067$ & $1.8\times$ \\
1.9 & 0.660 & 0.643 & $-0.017$ & $2.0\times$ \\
2.9 & 0.655 & 0.594 & $-0.061$ & $1.9\times$ \\
3.8 & 0.675 & 0.566 & $-0.110$ & $2.1\times$ \\
5.8 & 0.700 & 0.553 & $-0.148$ & $2.3\times$ \\
\midrule
\multicolumn{5}{l}{\textit{DeepSeek-Coder-6.7B-Base}} \\
1.0 & 0.838 & 0.841 & $+0.003$ & $4.2\times$ \\
1.9 & 0.860 & 0.884 & $+0.024$ & $4.6\times$ \\
3.8 & 0.852 & 0.870 & $+0.017$ & $5.0\times$ \\
5.8 & 0.854 & 0.875 & $+0.021$ & $5.0\times$ \\
\bottomrule
\end{tabular}
\end{table}

\subsection{Two Stage Diagnostic}
\label{sec:diagnostic}

\begin{table}[t]
\centering
\caption{Two stage diagnostic (Qwen2.5-Coder-3B calibration, seed=42).
Stage~1: pre RL entropy (nats); Stage~2: worst token entropy drop,
GRPO step 10$\to$150. Peak pass@10 ($n{=}20$): 3 seed mean $\pm$
half range. DeepSeek cross check below.}
\label{tab:diagnostic}
\small
\begin{tabular}{rrrr}
\toprule
SFT epochs & Stage 1 $H$ & Stage 2 drop & Peak p@10 \\
\midrule
1.0 & 0.249 & 24\% & $0.806\pm0.028$ \\
1.9 & 0.163 & 44\% & $0.750\pm0.079$ \\
2.9 & 0.137 & 56\% & $0.671\pm0.031$ \\
3.8 & 0.192 & 28\% & $0.608\pm0.026$ \\
5.8 & 0.120 & 64\% & $0.481\pm0.044$ \\
\bottomrule
\end{tabular}
\end{table}

The theory motivates a two stage protocol calibrated on the Qwen
ladder; the model agnostic claim is the \emph{ordering}
(lower pre RL entropy $\Rightarrow$ worse GRPO), not the specific
cutoffs.

\paragraph{Stage 1: Pre RL entropy triage (no GRPO compute).}
Flag a checkpoint as high risk if its mean next token entropy at
rollout temperature $T{=}1.0$ falls below a threshold:
\begin{equation}
  H(\pi_{\mathrm{SFT}}) \;<\; \tau_H
  \label{eq:stage1}
\end{equation}
where $H(\pi_{\mathrm{SFT}})$ is the entropy probe defined in
\S\ref{sec:methods} and we set $\tau_H = 0.18$~nats from the
per seed distribution in Table~\ref{tab:diagnostic}
($0.120$ to $0.249$ nats). Across $n{=}15$ observations (5
checkpoints $\times$ 3 seeds), pre RL entropy predicts peak GRPO
pass@10 at Spearman $\rho{=}{+}0.69$ ($p{<}0.01$, seed demeaned;
Table~\ref{tab:correlations}); with $\tau_H{=}0.18$, three of the
five checkpoints flag (1.9, 2.9, 5.8 epochs), and 5.8 epochs is
highest risk in every seed. Pre RL pass@1 is a strong
\emph{inverse} predictor ($\rho{=}{-}0.75$): the standard
highest pass@1 rule selects the worst checkpoint. The
theoretically grounded plug in
$\hat{\mathbb{E}}[\sigma^2_G] = \hat p(1{-}\hat p)(g{-}1)/g$
(Proposition~\ref{prop:variance}) is the quantity to screen on;
we use entropy instead because pre RL pass@1 spans only
$0.151$ to $0.187$ ($n{=}128$), giving limited discriminating signal.
This triage signal is valid when entropy reflects concentration on
plausible correct continuations; Remark~\ref{rem:paradox}
(Appendix~A.4) delineates the condition under which this breaks
down.

\paragraph{Stage 2: Early GRPO entropy monitor (step 150).}
During GRPO, flag collapse if
\begin{equation}
  \frac{\Delta H(10 \to 150)}{H(10)} > \tau_2
\end{equation}
where $H(10)$ is the worst token entropy at GRPO step~10 and we
set $\tau_2 = 0.50$. The relative drop broadly escalates with SFT
depth (Table~\ref{tab:diagnostic}): by step~150, the 5.8 epoch
checkpoint has lost $2.7\times$ as much entropy as the 1.0 epoch
checkpoint ($64\%$ vs.\ $24\%$). Under $\tau_2{=}0.50$, the
2.9 epoch checkpoint is also conservatively flagged ($56\%$ drop),
while the 1.9 epoch checkpoint is Stage 1 flagged but not Stage 2
flagged. Stopping a flagged run at step~150 rather than step~400 saves
$62.5\%$ of RL compute; Stage~1 can be used as a cautionary triage
signal before committing to a full GRPO run.

\paragraph{From diagnostic to selection rule.}
Treating 5.8 epochs as the highest risk checkpoint and selecting the
1.9 epoch optimum from
\S\ref{sec:rank_inversion} (in place of the standard pass@1
winner, the 5.8 epoch checkpoint) recovers $+0.090$ absolute on
deep eval pass@10 ($0.643$ vs.\ $0.553$, $n{=}128$, 3 seed mean;
Table~\ref{tab:deepeval_crossmodel}). DeepSeek independently places
its SFT optimum at 1.9 epochs ($+0.009$ deep eval pass@10, $0.884$
vs.\ $0.875$ at 5.8 epochs): both ladders converge on the same
SFT stopping recommendation even though the failure modes differ.

\paragraph{Diagnostic on DeepSeek: negative at the mean level.}
Applied to DeepSeek with the same $\tau_H{=}0.18$, Stage~1 flags
zero of four checkpoints at the 3 seed mean level
(entropies $0.185$ to $0.399$~nats, all above threshold;
Table~\ref{tab:sft_selection_full}); all
four arms then improve deep eval pass@10
($\Delta \in [+0.003, +0.024]$, Table~\ref{tab:deepeval_crossmodel}),
matching the contrastive validation of the threshold prediction in
\S\ref{sec:entropy_collapse}.

\subsection{Localizing the Failure to the SFT Stage}
\label{sec:mitigation}

We test whether the rank inversion is an artefact of the GRPO
hyperparameters or of the SFT objective by perturbing each
separately on the Qwen rank inversion regime: a KL to reference
penalty during GRPO and label smoothing SFT to restore pre RL
entropy. The rank inversion persists under both, suggesting the
failure is not a trivial GRPO hyperparameter artefact and appears
upstream of the tested GRPO variants. DeepSeek requires no such modification
(\S\ref{sec:entropy_collapse}).

\begin{table*}[tb]
\centering
\caption{Post hoc interventions: KL to reference penalty
($\beta{=}0.01$) and label smoothing (LS) restoration
($\alpha{=}0.1$, 5.8 epoch only). Seed=42, 400 GRPO steps;
pre RL pass@1 is $n{=}128$ (seed=42 only; 3 seed means
$0.151$/$0.187$ for 1.0/5.8 epochs in
Table~\ref{tab:sft_selection}), peak p@10 is $n{=}20$.}
\label{tab:interventions}
\small
\begin{tabular}{lclrr}
\toprule
SFT epochs & $\beta$ & Restoration & Pre RL pass@1 & Peak pass@10 \\
\midrule
1.0 & 0.0  & n/a               & 0.137 & 0.824 \\
5.8 & 0.0  & n/a               & 0.187 & 0.527 \\
1.0 & 0.01 & n/a               & 0.137 & 0.684 \\
5.8 & 0.01 & n/a               & 0.187 & 0.461 \\
5.8 & 0.0  & LS $\alpha{=}0.1$ & 0.600 & 0.214 \\
\bottomrule
\end{tabular}
\end{table*}

\paragraph{KL to reference penalty ($\beta{=}0.01$).}
Adding a positive KL to reference penalty regresses both checkpoints
below their $\beta{=}0$ baselines
(Table~\ref{tab:interventions}, Figure~\ref{fig:posthoc_interventions}a): peak
pass@10 falls from $0.824$ to $0.684$ at 1.0 epochs and from $0.527$
to $0.461$ at 5.8 epochs, preserving the rank inversion. We did not
tune $\beta$; this rules out a standard KL penalty at this value as
a rescue, not that no KL strength could recover trainability.

\begin{figure*}[t]
  \centering
  \includegraphics[width=\textwidth]{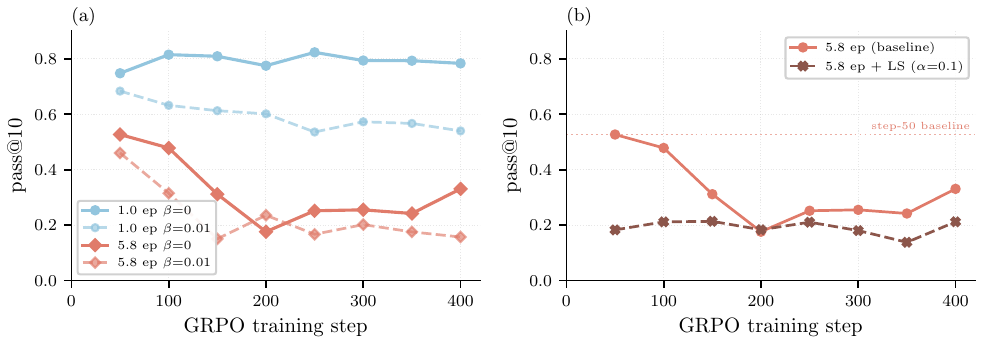}
  \caption{\textbf{Post hoc interventions fail to rescue trainability}
    (Qwen2.5-Coder-3B-Base). (a)~\emph{KL penalty preserves the rank
    inversion}: pass@10 under $\beta{=}0$ (solid) vs.\ $\beta{=}0.01$
    (dashed); both KL curves underperform their baselines.
    (b)~\emph{Entropy restoration paradox}: 5.8 epoch baseline vs.\
    5.8 epoch $+$ label smoothing ($\alpha{=}0.1$). LS yields the
    highest pre RL pass@1 of any run ($0.600$) but pass@10 caps at
    $0.214$.}
  \label{fig:posthoc_interventions}
\end{figure*}

\paragraph{Entropy restoration via label smoothing.}
Label smoothing on the 5.8 epoch checkpoint yields the
\emph{highest} pre RL pass@1 of any run in the study ($0.600$) yet
the \emph{lowest} peak GRPO pass@10 ($0.214$;
Table~\ref{tab:interventions}, Figure~\ref{fig:posthoc_interventions}b).
LS uniformly redistributes probability mass across the vocabulary,
preserving argmax ordering while flattening the sampling
distribution GRPO needs for advantage variance. This
\emph{restoration paradox} shows that entropy is necessary but not
sufficient for GRPO trainability.

Appendix~A.4 (Remark~\ref{rem:paradox}) formalises this: LS raises
entropy independently of whether the added mass falls on
correct solution tokens, violating the unimodal monotonicity
condition under which entropy is a valid diagnostic.
PEAR~\citep{zhang2026pear} (SFT loss reweighting before collapse)
and Clip-Cov entropy regularisation~\citep{cui2025entropy}
(GRPO side diversity control) may avoid these failure modes; we
leave empirical evaluation to future work.

\section{Conclusion}

SFT overtraining can precede entropy collapse under GRPO: the
resulting homogeneous rollouts are associated with pass@1 falling below
$p^*(8)$ during training, where the group relative advantage signal
vanishes (Proposition~\ref{prop:variance}). Pre RL entropy identifies
checkpoints at high risk of entering this regime, while $p^*(g)$
characterises the point at which GRPO's group relative signal
structurally collapses. In Qwen, the deepest checkpoint starts above
$p^*(8)$ pre RL but crosses below it during early GRPO; in DeepSeek,
pass@1 stays far above $p^*(8)$, and rank compression rather than
inversion is observed.

The consequence is concrete: pre RL pass@1, the standard
checkpoint selection criterion in RLVR pipelines, can
actively select the worst initialiser in the entropy collapse
regime, and replacing it with pre RL entropy triage
recovers $+0.090$ absolute on deep eval pass@10 ($0.643$
vs.\ $0.553$, $n{=}128$, 3 seed mean) at the cost of a single
forward pass.

\paragraph{Limitations and future work.}
Proposition~\ref{prop:variance} assumes binary rewards, so process
rewards, partial credit, and math reasoning remain natural stress tests.
Our evidence covers two code models; whether the same entropy and
advantage variance diagnostics calibrate across scale, model family, and
RL algorithms beyond GRPO/DAPO remains open. We also leave PEAR,
Clip-Cov, and other entropy preserving interventions to future work; a
successful rescue of collapsed checkpoints would provide stronger causal
evidence for entropy collapse as the driver of rank inversion.

\section*{Impact Statement}

The standard recipe of selecting the highest scoring post SFT checkpoint
for RL can select the worst trainable one.  Our pre RL entropy triage
requires no additional RL compute and can reduce wasted training runs.
We see no negative societal consequences specific to this work beyond
those common to language model research generally.

\FloatBarrier
\bibliography{references}
\bibliographystyle{icml2026}

\newpage
\onecolumn
\section*{Appendix}
\label{sec:appendix}

\subsection*{A.1. Low Variance pass@$k$ Estimation}

Directly computing pass@$k$ using only $k$ sampled outputs per problem
leads to high variance. We follow the unbiased estimation method
of~\citet{humaneval_2021}: for each problem $x_i$ from evaluation
dataset $\mathcal{D}$, we generate $n \geq k$ samples and count correct
samples $c_i$. The unbiased estimator is:
\[
  \mathrm{pass@}k
  := \mathbb{E}_{x_i \sim \mathcal{D}}
  \left[
    1 - \frac{\binom{n - c_i}{k}}{\binom{n}{k}}
  \right]
\]
For in training evaluations we set $n{=}20$; in deep evaluations we set $n{=}128$. The estimator supports any $k \leq n$.

\subsection*{A.2. GRPO Objective}
\label{sec:grpo_objective}

We use the DAPO variant of GRPO~\citep{shao2024deepseekmath, dapo2025}:
\begin{equation}
  \mathcal{L}_{\mathrm{GRPO}}(\theta)
  = -\,\mathbb{E}_{x,\{y_i\}}\!\left[
    \frac{1}{g}\sum_{i=1}^{g}
    \frac{1}{|y_i|}\sum_{t=1}^{|y_i|}
    \min\!\bigl(\rho_t^{(i)}\hat{A}_i,\;
    \mathrm{clip}\bigl(\rho_t^{(i)},\,1{-}\varepsilon,\,1{+}\varepsilon_{\mathrm{high}}\bigr)\hat{A}_i
    \bigr)\right]
\end{equation}
where $\rho_t^{(i)} = \pi_\theta(y_t^{(i)} \mid x, y_{<t}^{(i)}) /
\pi_{\theta_{\mathrm{old}}}(y_t^{(i)} \mid x, y_{<t}^{(i)})$ is the
per token importance ratio and $\hat{A}_i$ is the group relative
advantage.  Hyperparameters for this paper are listed in
Section~\ref{sec:methods}.

\subsection*{A.3. Formal Derivation of Gradient Vanishing}
\label{sec:gradient_vanishing}

\paragraph{Setup and Definitions.}
Consider a prompt $x$ and a policy $\pi_\theta$ generating responses
$y \sim \pi_\theta(\cdot \mid x)$ evaluated by a binary reward
$R(y) \in \{0,1\}$.  Let $p = P(R(y)=1)$ denote the per prompt success
probability.  GRPO generates a group $G = \{y_1,\ldots,y_g\}$
of $g$ independent responses, assigning group relative advantages
$A_i = r_i - \bar{r}$ where $\bar{r} = \frac{1}{g}\sum_j r_j$.
The within group advantage variance is
\[
  \sigma_G^2
  = \frac{1}{g}\sum_{i=1}^{g} A_i^2
  = \bar{r}(1-\bar{r})
\]
which equals zero (and the gradient vanishes) whenever all responses
yield the same reward.  The \emph{diversity ratio} is
\[
  \Delta_k
  = \frac{\mathrm{pass@}k - \mathrm{pass@}1}
         {\mathrm{pass@}k + \varepsilon}
\]
where $\varepsilon > 0$ is a stability constant.

\paragraph{Assumptions.}
The results below invoke four assumptions:
(A1) binary rewards $r_i \in \{0, 1\}$;
(A2) i.i.d.\ sampling of rollouts, $y_i \sim \pi_\theta(\cdot \mid x)$;
(A3) group size $g \geq 2$;
(A4) bounded score functions
$\|\nabla_\theta \log \pi_\theta(y \mid x)\|_2 \leq B$ for some $B < \infty$.

\begin{proposition}[Exact Advantage Variance]
\label{prop:variance}
Under i.i.d.\ sampling with binary rewards and group size $g \geq 2$:
\[
  \mathbb{E}[\sigma_G^2]
  \;=\;
  p(1-p) \cdot \frac{g-1}{g}
\]
where $p = \mathrm{pass@1}$ is the per problem success probability.
In particular, $\mathbb{E}[\sigma_G^2] = 0$ if and only if
$p \in \{0, 1\}$.
\end{proposition}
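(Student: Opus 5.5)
The plan is to reduce $\sigma_G^2$ to a function of the number of successes in the group and then compute its expectation from binomial moments. With binary rewards (A1), $r_i^2 = r_i$, so
\[
  \frac{1}{g}\sum_{i=1}^{g} A_i^2
  = \frac{1}{g}\sum_i r_i^2 - \bar r^2
  = \bar r - \bar r^2
  = \bar r(1-\bar r).
\]
This verifies the identity stated in the setup. Write $S = \sum_{i=1}^g r_i$, so that $\bar r = S/g$. Under (A2), the rewards are i.i.d.\ Bernoulli$(p)$, hence $S \sim \mathrm{Binomial}(g,p)$.

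Next, expand $\mathbb{E}[\sigma_G^2] = \mathbb{E}[\bar r] - \mathbb{E}[\bar r^2]$. The first term is $\mathbb{E}[\bar r] = p$. For the second, use $\mathbb{E}[\bar r^2] = \mathrm{Var}(\bar r) + p^2 = p(1-p)/g + p^2$. Subtracting gives
\[
  p - p^2 - \frac{p(1-p)}{g} = p(1-p)\,\frac{g-1}{g}.
\]
Equivalently, $\sigma_G^2$ is the biased sample variance, whose expectation is $\frac{g-1}{g}$ times the population variance $p(1-p)$. I would state this Bessel-correction interpretation as a remark.

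For the ``if and only if'' clause, (A3) gives $g \ge 2$, so the factor $(g-1)/g$ is strictly positive. Hence $\mathbb{E}[\sigma_G^2]=0$ exactly when $p(1-p)=0$, i.e.\ $p\in\{0,1\}$. As a consistency check, note that $\sigma_G^2 \ge 0$ pointwise, so zero expectation forces $\sigma_G^2=0$ almost surely, i.e.\ every group has identical rewards. This agrees with the direct computation $\Pr[S\in\{0,g\}] = p^g+(1-p)^g$, which equals $1$ only for $p\in\{0,1\}$ when $g\ge2$.

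There is no real obstacle here; the only point needing care is the modelling step. The paper's actual rewards are $\{0,2\}$ rather than $\{0,1\}$, so I would note that rescaling by $c$ multiplies $\sigma_G^2$ by $c^2$ without affecting the zero set. I would also note that the result needs independence within the group, as in (A2), because the $\mathrm{Var}(\bar r)=p(1-p)/g$ step fails under correlated rollouts. Assumption (A4) is not needed for this proposition.
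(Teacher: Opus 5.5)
Your proposal is correct and follows the paper's proof essentially verbatim: write $\bar r = S/g$ with $S\sim\mathrm{Binomial}(g,p)$, expand $\mathbb{E}[\bar r]-\mathbb{E}[\bar r^2]$ using $\mathrm{Var}(\bar r)=p(1-p)/g$, and simplify to $p(1-p)(g-1)/g$. Two of your steps go beyond the paper's proof, which leaves them implicit: you derive $\sigma_G^2=\bar r(1-\bar r)$ from $r_i^2=r_i$, and you justify the ``if and only if'' clause via $(g-1)/g>0$.
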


\begin{proof}
Let $S = \sum_i r_i \sim \mathrm{Binomial}(g, p)$.  Then
$\bar{r} = S/g$ and
\[
  \mathbb{E}[\sigma_G^2]
  = \mathbb{E}[\bar{r}(1-\bar{r})]
  = \mathbb{E}[\bar{r}] - \mathbb{E}[\bar{r}^2]
\]
Since $\mathbb{E}[\bar{r}] = p$ and
$\mathbb{E}[\bar{r}^2] = \mathrm{Var}(\bar{r}) + p^2
= p(1-p)/g + p^2$:
\begin{equation*}
  \mathbb{E}[\sigma_G^2]
  = p - \frac{p(1-p)}{g} - p^2
  = p(1-p) \cdot \frac{g-1}{g}
  \qedhere
\end{equation*}
\end{proof}

The result generalises to any $\{0,c\}$ reward scale as
$\mathbb{E}[\sigma_G^2] = c^2\,p(1{-}p)(g{-}1)/g$; the zero gradient
condition $p \in \{0,1\}$ is scale invariant, so our experimental
reward of $+2.0/0$ does not affect the qualitative conclusions.

\begin{corollary}[Gradient Signal Vanishing]
\label{cor:gradient}
Under assumptions (A1) through (A4) (binary rewards, i.i.d.\ sampling,
$g \geq 2$, bounded score functions
$\|\nabla_\theta \log \pi_\theta\|_2 \leq B$):
\[
  \mathbb{E}\bigl[\|\nabla_\theta J(x)\|_2^2\bigr]
  \;\leq\;
  B^2 \cdot p(1-p) \cdot \frac{g-1}{g}
\]
In particular, $\mathbb{E}[\|\nabla_\theta J\|_2] \to 0$
as $p \to 0$ or $p \to 1$.
\end{corollary}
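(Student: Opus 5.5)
Write the per-prompt gradient estimator as $\nabla_\theta J(x) = \frac{1}{g}\sum_{i=1}^{g} A_i\, s_i$, with $s_i = \nabla_\theta \log \pi_\theta(y_i \mid x)$ and $A_i = r_i - \bar r$. This is the on-policy form of the objective in A.2 at $\rho=1$, with no clipping and with length normalisation absorbed into $s_i$. The plan is to bound $\|\nabla_\theta J(x)\|_2^2$ pointwise, for each realisation of the group, by $B^2$ times the realised within-group variance $\sigma_G^2$. Taking expectations and invoking Proposition~\ref{prop:variance} then gives exactly $B^2 p(1-p)(g-1)/g$.

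\emph{Pointwise step.} Apply the triangle inequality and then Cauchy--Schwarz over the $g$ summands:
\[
\Bigl\|\tfrac{1}{g}\textstyle\sum_i A_i s_i\Bigr\|_2 \le \tfrac{1}{g}\textstyle\sum_i |A_i|\,\|s_i\|_2 \le B\cdot \tfrac{1}{g}\textstyle\sum_i |A_i| \le B\Bigl(\tfrac{1}{g}\textstyle\sum_i A_i^2\Bigr)^{1/2}.
\]
The first inequality is the triangle inequality, the second uses (A4), and the third is Jensen's inequality (the power-mean inequality) applied to the uniform average. Squaring gives $\|\nabla_\theta J(x)\|_2^2 \le B^2 \sigma_G^2$, using the identity $\sigma_G^2 = \frac1g\sum_i A_i^2$ from the setup. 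This step needs only (A1) and (A3); because (A1) makes $\sigma_G^2 = \bar r(1-\bar r)$, the bound is exactly zero on degenerate groups.

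\emph{Expectation step.} Take expectations over the i.i.d.\ group (A2). Proposition~\ref{prop:variance} gives $\mathbb{E}[\sigma_G^2] = p(1-p)(g-1)/g$, which yields the displayed inequality. For the ``in particular'' clause, apply Jensen's inequality again: $\mathbb{E}\|\nabla_\theta J\|_2 \le (\mathbb{E}\|\nabla_\theta J\|_2^2)^{1/2} \le B\sqrt{p(1-p)(g-1)/g}$. The right-hand side tends to $0$ as $p\to 0$ or $p\to 1$, because $B$ and $g$ are fixed.

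\emph{Main obstacle.} The hard part is pinning down what $\nabla_\theta J(x)$ denotes, not the inequality chain. With the DAPO clipping, the ratios $\rho_t$, and the $1/|y_i|$ normalisation, the true gradient at $\theta \ne \theta_{\mathrm{old}}$ contains extra factors. My first move is to state that the corollary concerns the on-policy estimator at $\theta=\theta_{\mathrm{old}}$. There the clip is inactive, $\rho_t=1$, and the normalised per-token score sum is a single vector $s_i$. I would then assume (A4) is imposed on this normalised score. If the clipped off-policy case is needed, a fallback is to bound each $\rho_t \le 1+\varepsilon_{\mathrm{high}}$ and carry that constant into $B$; this changes only the constant, not the $p(1-p)$ dependence. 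Using the std-normalised advantage instead of the centred $A_i$ would break the bound, so the statement must use the centred $A_i$ as defined in the setup.
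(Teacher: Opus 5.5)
Your proof is correct and is essentially the paper's own. The paper gets the pointwise bound $\|\nabla_\theta J(x)\|_2^2 \le B^2\sigma_G^2$ from a single Cauchy--Schwarz step on $\frac{1}{g}\sum_i A_i s_i$, pairing $\frac{1}{g}\sum_i A_i^2$ with $\frac{1}{g}\sum_i\|s_i\|_2^2 \le B^2$; this gives the same estimate as your triangle-inequality-plus-power-mean chain, after which the paper takes expectations and applies Proposition~\ref{prop:variance} exactly as you do.

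I would drop your off-policy fallback: the statement does not need it, and it fails as stated, because when $A_i<0$ and $\rho_t>1+\varepsilon_{\mathrm{high}}$ the $\min$ selects the unclipped term, so $\rho_t$ is not bounded by $1+\varepsilon_{\mathrm{high}}$ on the tokens that carry gradient.
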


\begin{proof}
Write $s_i = \nabla_\theta \log \pi_\theta(y_i \mid x)$ for the
score functions. By Cauchy Schwarz on finite sums:
\[
  \left\|\frac{1}{g}\sum_i A_i s_i\right\|^2
  \leq
  \left(\frac{1}{g}\sum_i A_i^2\right)
  \left(\frac{1}{g}\sum_i \|s_i\|^2\right)
\]
Therefore $\|\nabla_\theta J(x)\|^2 \leq \sigma_G^2 \cdot B^2$.
Taking expectations and applying Proposition~\ref{prop:variance}
yields the result.
\end{proof}

\begin{proposition}[Degenerate Group Probability]
\label{prop:degenerate}
The probability that a GRPO group is degenerate (all correct
or all incorrect, yielding zero gradient) is exactly:
\[
  P(\sigma_G^2 = 0)
  \;=\;
  (1-p)^g + p^g
\]
\end{proposition}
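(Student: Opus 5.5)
The plan is to reduce the event $\{\sigma_G^2 = 0\}$ to an event about the success count $S = \sum_i r_i$, exactly as in the proof of Proposition~\ref{prop:variance}, and then read off the probability from the binomial law of $S$. From the setup, $\sigma_G^2 = \bar r(1-\bar r)$ with $\bar r = S/g \in [0,1]$. This product vanishes iff $\bar r \in \{0,1\}$, i.e.\ iff $S = 0$ (all rollouts incorrect) or $S = g$ (all correct). Equivalently, since $\sigma_G^2 = \frac{1}{g}\sum_i A_i^2$ is a sum of squares, it is zero iff every $A_i = r_i - \bar r = 0$, i.e.\ all $r_i$ are equal. I would state both characterisations to match the ``all correct or all incorrect'' wording of the statement.

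Next, invoke (A1)--(A2): the $r_i$ are i.i.d.\ Bernoulli$(p)$, so $S \sim \mathrm{Binomial}(g,p)$. Then $P(S=0) = (1-p)^g$ and $P(S=g) = p^g$. For $g \ge 1$ the two events are disjoint, since $0 \neq g$, so the probabilities add to give $(1-p)^g + p^g$. I would remark that (A3), $g \ge 2$, is only needed for the degenerate event to be nontrivial; for $g=1$ the formula still holds and gives $1$.

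There is essentially no obstacle here. The only point needing care is the first step: the equivalence $\sigma_G^2 = 0 \iff S \in \{0,g\}$ must use that $\bar r(1-\bar r)=0$ forces $\bar r$ to an endpoint, which holds because $\bar r$ lies in $[0,1]$. I would also note the $\{0,c\}$ reward-scale generalisation: rescaling multiplies $\sigma_G^2$ by $c^2$ and so leaves the zero set unchanged. Hence the formula applies verbatim to the $+2.0/0$ reward used in the experiments.

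Finally, I would add a sanity link to the threshold $p^*(g)$ of Proposition~\ref{prop:threshold}: as a function of $p$, the expression $(1-p)^g + p^g$ is symmetric about $1/2$ and decreasing on $[0,1/2]$, so ``majority degenerate'' means $(1-p)^g + p^g > 1/2$. For small $p$ this is dominated by $(1-p)^g$, and $(1-p)^8 = 1/2$ gives $p \approx 0.083$. This matches the quoted $p^*(8)$ and shows how the present statement feeds that threshold.
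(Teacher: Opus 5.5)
Your proposal is correct and matches the paper's proof: you reduce $\sigma_G^2 = \bar r(1-\bar r) = 0$ to $S \in \{0,g\}$ (using $\bar r \in [0,1]$), then add the disjoint binomial probabilities $(1-p)^g$ and $p^g$. Your extra remarks (the sum-of-squares characterisation, the $g=1$ case, reward-scale invariance, and $p^*(8) \approx 1 - 2^{-1/8} \approx 0.083$) are accurate but not needed for the argument.
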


\begin{proof}
$\sigma_G^2 = 0$ iff $\bar{r} \in \{0,1\}$, i.e.\ $S \in \{0,g\}$.
The two events are mutually exclusive with probabilities $(1-p)^g$ and
$p^g$ respectively.
\end{proof}

\paragraph{Practical interpretation.}
For small $p$, the degenerate probability is dominated by
$(1-p)^g \approx 1 - gp$, meaning nearly all groups receive identical
(incorrect) rewards. For our worst collapsed checkpoint
the 5.8 epoch checkpoint at step~200 ($\mathrm{pass@1} = 0.020$) with group
size $g = 8$, this gives $P(\sigma_G^2 = 0) \approx 0.851$. In other
words, $85\%$ of GRPO groups carry zero gradient at the worst point of
training, leaving the policy effectively frozen.

We now sharpen Proposition~\ref{prop:degenerate} into a deterministic threshold on $p$ at which the majority of groups carry zero gradient by construction.

\paragraph{Collapse threshold.}\label{app:threshold}
We derive the \emph{majority degenerate threshold} $p^*(g)$, the pass@1 value below which
the majority of GRPO groups carry zero gradient by construction.

\begin{definition}[Majority degenerate regime]
A pass@1 value $p$ is \emph{majority degenerate} at group size $g$ if
$\Pr(\sigma_G^2 = 0) > \tfrac{1}{2}$, i.e.\ $f_g(p) := (1-p)^g + p^g > \tfrac{1}{2}$.
\end{definition}

\begin{proposition}[Existence and uniqueness of $p^*(g)$]\label{prop:threshold}
For every integer $g \ge 3$, there exists a unique $p^*(g) \in (0, \tfrac{1}{2})$ such that,
for $p \in [0, \tfrac{1}{2}]$:
\[
  f_g(p) > \tfrac{1}{2} \;\iff\; p < p^*(g), \qquad f_g\!\left(p^*(g)\right) = \tfrac{1}{2}.
\]
Moreover $f_g$ is strictly decreasing on $(0, \tfrac{1}{2})$.
\end{proposition}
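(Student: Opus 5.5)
We prove strict monotonicity of $f_g(p)=(1-p)^g+p^g$ on $(0,\tfrac12)$, evaluate $f_g$ at the two endpoints, and then apply the intermediate value theorem. Strict monotonicity makes the crossing unique and gives the stated equivalence.

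\emph{Monotonicity.} Differentiate to get
\[
f_g'(p)=g\bigl(p^{g-1}-(1-p)^{g-1}\bigr).
\]
For $p\in(0,\tfrac12)$ we have $0<p<1-p$. Since $g-1\ge 1$, the map $t\mapsto t^{g-1}$ is strictly increasing on $[0,\infty)$, so $p^{g-1}<(1-p)^{g-1}$ and hence $f_g'(p)<0$. The function $f_g$ is a polynomial, so it is continuous on $[0,\tfrac12]$. A negative derivative on the open interval therefore makes $f_g$ strictly decreasing on the closed interval $[0,\tfrac12]$, which also gives the final assertion of Proposition~\ref{prop:threshold}.

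\emph{Endpoint values.} At the left end, $f_g(0)=1>\tfrac12$. At the right end, $f_g(\tfrac12)=2\cdot 2^{-g}=2^{1-g}$. This is $<\tfrac12$ exactly when $g\ge 3$; for $g=2$ it equals $\tfrac12$, which is why the hypothesis $g\ge3$ is needed. The intermediate value theorem now yields some $p^*(g)\in(0,\tfrac12)$ with $f_g(p^*(g))=\tfrac12$. Strict monotonicity makes this root unique in $[0,\tfrac12]$.

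\emph{Equivalence.} For $p\in[0,\tfrac12]$, strict decrease gives $f_g(p)>f_g(p^*(g))=\tfrac12$ if and only if $p<p^*(g)$.

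The only delicate point is the endpoint check, namely confirming that $2^{1-g}<\tfrac12$ requires $g\ge3$ and that the degenerate case $g=2$ is excluded. Everything else is routine.
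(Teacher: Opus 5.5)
Your proof is correct and follows essentially the same route as the paper. Both establish strict monotonicity from the sign of $f_g'(p)=g\bigl(p^{g-1}-(1-p)^{g-1}\bigr)$, check the endpoints $f_g(0)=1$ and $f_g(\tfrac12)=2^{1-g}<\tfrac12$ for $g\ge 3$, and then apply the intermediate value theorem, with uniqueness and the equivalence following from monotonicity; the paper's extra symmetry step $f_g(1-p)=f_g(p)$ is not needed for the statement as posed.
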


\begin{proof}
We proceed in five steps.

\textbf{(i) Symmetry.} $f_g(1-p) = p^g + (1-p)^g = f_g(p)$, so it suffices to study
$f_g$ on $[0, \tfrac{1}{2}]$.

\textbf{(ii) Boundary values.} $f_g(0) = 1$ and $f_g(\tfrac{1}{2}) = 2 \cdot 2^{-g} = 2^{1-g}$.
For $g \ge 3$, $2^{1-g} \le \tfrac{1}{4} < \tfrac{1}{2}$.

\textbf{(iii) Continuity.} $f_g$ is a polynomial, hence continuous on $[0,1]$.

\textbf{(iv) Strict monotonicity.} Differentiating:
\[
  f_g'(p) = g\bigl[p^{g-1} - (1-p)^{g-1}\bigr].
\]
For $p \in (0, \tfrac{1}{2})$, $p < 1-p$, so $p^{g-1} < (1-p)^{g-1}$ and $f_g'(p) < 0$.
Hence $f_g$ is strictly decreasing on $(0, \tfrac{1}{2})$.

\textbf{(v) Existence and uniqueness via IVT.} Since $f_g(0) = 1 > \tfrac{1}{2} > 2^{1-g}
= f_g(\tfrac{1}{2})$, the Intermediate Value Theorem gives $p^*(g) \in (0, \tfrac{1}{2})$
with $f_g(p^*(g)) = \tfrac{1}{2}$. Strict monotonicity makes it unique. The biconditional
follows immediately. \qed
\end{proof}

\begin{remark}[i.i.d.\ approximation]\label{rem:iid}
Proposition~\ref{prop:threshold} assumes rollouts within a group are i.i.d.\ In practice,
completions share an autoregressive prefix up to the first diverging token; empirically this
occurs within 10 to 20 tokens for \textsc{HumanEval+} problems (mean completion length
$\approx$400 tokens), making the per rollout dependence negligible relative to the
group level variance. The threshold $p^*(g)$ is therefore robust to this approximation.
\end{remark}

\paragraph{Numerical values.}
We solve $(1-p)^g + p^g = \tfrac{1}{2}$ numerically for standard group sizes.
For $g \ge 8$, $p^g \le 0.083^8 \approx 2 \times 10^{-9}$ so the equation reduces to
$(1-p)^g \approx \tfrac{1}{2}$, giving the leading order approximation $p^*(g) \approx
1 - 2^{-1/g}$.

\begin{table}[h]
\centering
\caption{Majority degenerate threshold. $p^*(g)$ for standard GRPO group sizes.}
\label{tab:threshold}
\begin{tabular}{cccc}
\toprule
$g$ & $p^*(g)$ & $\mathbb{E}[\sigma_G^2]$ at $p^*(g)$ & Interpretation \\
\midrule
4  & 0.1594 & 0.1005 & $>$15\% pass@1 required \\
8  & 0.0830 & 0.0666 & $>$8.3\% pass@1 required \\
16 & 0.0424 & 0.0381 & $>$4.2\% pass@1 required \\
32 & 0.0214 & 0.0203 & $>$2.1\% pass@1 required \\
\bottomrule
\end{tabular}
\end{table}

\paragraph{Asymptotic behaviour.}
For large $g$, dropping $p^g = O(2^{-g})$ and solving $(1-p)^g = \tfrac{1}{2}$ gives
$p^*(g) \to 0$ at rate $\Theta(1/g)$ with leading term $p^*(g) \approx (\ln 2)/g$.
Doubling the rollout budget therefore only halves $p^*(g)$: a model in active collapse
($p \ll p^*(g)$) cannot recover trainability by increasing $g$ alone.

\paragraph{Empirical validation.}
We validate $p^*(8) = 0.083$ against the Qwen2.5-Coder-3B-Base 5.8 epoch checkpoint
at $g = 8$.

\textbf{Qwen2.5-Coder-3B-Base, 5.8 epochs.}
Pre RL pass@1 $p_0 = 0.187 > p^*(8)$: only $\approx 19\%$ of groups are degenerate at
initialization. During GRPO, pass@1 drops to $p_{200} = 0.020$ at step 200.
At this value:
\[
  \Pr(\sigma_G^2 = 0) = (0.98)^8 + (0.02)^8 \approx 0.8508,
\]
\[
  \mathbb{E}[\sigma_G^2] = 0.020 \times 0.980 \times \tfrac{7}{8} = 0.0172.
\]
Since $p_{200} = 0.020 < 0.083 = p^*(8)$, the model is squarely in the majority degenerate
regime: \textbf{85\% of groups produce zero gradient}, consistent with the 85.1\% reported
in Table~\ref{tab:diagnostic}.

$p^*(g)$ is a \emph{falsifiable prediction}: from a measured pass@1 value, it identifies
the regime in which gradient signal structurally vanishes. For the 5.8 epoch checkpoint,
pre RL pass@1 starts above threshold, and the threshold is crossed between GRPO steps 50
and 200 ($p$ falls from 0.085 to 0.020), after which no recovery is observed.

\paragraph{Bilateral empirical validation.}
Table~\ref{tab:deepseek_variance} instantiates
Proposition~\ref{prop:variance} for both ladders: the four
DeepSeek arms sit at $\mathbb{E}[\sigma^2_G] \in [0.20, 0.21]$
with $\Pr(\sigma^2_G{=}0) \leq 0.032$ throughout training, while
the Qwen 5.8 epoch checkpoint at GRPO step~200 collapses to
$\mathbb{E}[\sigma^2_G] = 0.017$ with $\Pr(\sigma^2_G{=}0) = 0.851$
($12.3\times$ smaller variance, $27\times$ higher degenerate group
probability), explaining the opposite GRPO outcomes.

\begin{table}[H]
\centering
\caption{Expected within group advantage variance from
Proposition~\ref{prop:variance} ($g{=}8$). DeepSeek values use
3 seed mean pre RL pass@1; the Qwen 5.8 epoch row is the
collapse point at GRPO step~200.}
\label{tab:deepseek_variance}
\small
\begin{tabular}{llrrr}
\toprule
Model & Epochs & $p$ & $\mathbb{E}[\sigma^2_G]$ & $\Pr(\sigma^2_G{=}0)$ \\
\midrule
DeepSeek-6.7B & 1.0 & 0.351 & 0.199 & 0.032 \\
DeepSeek-6.7B & 1.9 & 0.383 & 0.207 & 0.022 \\
DeepSeek-6.7B & 3.8 & 0.413 & 0.212 & 0.015 \\
DeepSeek-6.7B & 5.8 & 0.413 & 0.212 & 0.015 \\
\midrule
Qwen-3B & 5.8, step~200 & 0.020 & 0.017 & 0.851 \\
\bottomrule
\end{tabular}
\end{table}

\subsection*{A.4. Entropy as a Diagnostic Proxy}\label{app:entropy-proxy}

We characterise when pre RL entropy $H$ is a valid proxy for pass@1 $p$, and when it fails.

\paragraph{Unimodal distributions.}

\begin{assumption}[Unimodal correct mass]\label{asm:unimodal}
There is a single distinguished correct prefix event at the next token level with
probability $\pi(c) = p$, and the residual distribution $\tilde\pi$ on
$\mathcal{V} \setminus \{c\}$ has fixed shape: $\pi(v) = (1-p)\,\tilde\pi(v)$ for $v \ne c$.
This is a distributional analogy capturing the relevant monotonicity at the token level;
it is not a claim that a single vocabulary token determines correctness.
\end{assumption}

Under Assumption~\ref{asm:unimodal}, the chain rule for entropy gives:
\[
  H(\pi) = H_{\mathrm{bin}}(p) + (1-p)\,H(\tilde\pi),
\]
where $H_{\mathrm{bin}}(p) = -p \log p - (1-p)\log(1-p)$ is the binary entropy.
Differentiating with respect to $p$:
\[
  \frac{\mathrm{d}H(\pi)}{\mathrm{d}p} = \log\frac{1-p}{p} - H(\tilde\pi).
\]

\begin{proposition}[Unimodal monotonicity]\label{prop:monotonicity}
Under Assumption~\ref{asm:unimodal}, $H(\pi)$ is strictly increasing in $p$ on
$(0, \tfrac{1}{2})$ whenever $H(\tilde\pi) < \log[(1-p)/p]$.
\end{proposition}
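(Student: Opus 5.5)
The argument is a direct sign analysis of the derivative already computed from the chain rule. Under Assumption~\ref{asm:unimodal}, the residual $\tilde\pi$ does not depend on $p$, so $H(\tilde\pi)$ is a constant and
\[
  H(\pi) = H_{\mathrm{bin}}(p) + (1-p)\,H(\tilde\pi).
\]
First I would confirm the chain-rule decomposition. Write
\[
  -\sum_v \pi(v)\log\pi(v) = -p\log p - \sum_{v\neq c}(1-p)\tilde\pi(v)\bigl[\log(1-p)+\log\tilde\pi(v)\bigr],
\]
and use $\sum_{v\ne c}\tilde\pi(v)=1$. Next I would differentiate term by term. The binary entropy satisfies $H_{\mathrm{bin}}'(p)=\log\frac{1-p}{p}$, and the linear term contributes $-H(\tilde\pi)$. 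This recovers
\[
  \frac{\mathrm{d}H}{\mathrm{d}p}=\log\frac{1-p}{p}-H(\tilde\pi).
\]

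The key step is to turn a pointwise sign condition into strict monotonicity on an interval. I would read the hypothesis as holding at every $p$ in the interval $I\subseteq(0,\tfrac12)$ under consideration. On $I$ the derivative is then strictly positive, and $H(\pi)$ is differentiable there, being a combination of logs and polynomials on the open interval. The mean value theorem then gives $H(\pi(p_2))-H(\pi(p_1))=(p_2-p_1)H'(\xi)>0$ for any $p_1<p_2$ in $I$.

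The main obstacle is the quantifier in the statement: the condition $H(\tilde\pi)<\log[(1-p)/p]$ depends on $p$. I would resolve this by observing that $p\mapsto\log\frac{1-p}{p}$ is strictly decreasing on $(0,\tfrac12)$, going from $+\infty$ down to $0$. So the set where the condition holds is an interval $(0,p_0)$, with
\[
  p_0=\min\Bigl\{\tfrac12,\ \bigl(1+e^{H(\tilde\pi)}\bigr)^{-1}\Bigr\}.
\]
Monotonicity therefore holds on the whole connected region $(0,p_0)$, and this is the precise meaning of ``whenever'' in the statement.

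I would also remark on the edge case $H(\tilde\pi)=0$. There the region is all of $(0,\tfrac12)$, since $\log\frac{1-p}{p}>0$ for every $p$ in that range.
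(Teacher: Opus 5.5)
Your proposal is correct and takes the same approach as the paper: both reduce the claim to the sign of $\mathrm{d}H/\mathrm{d}p = \log\frac{1-p}{p} - H(\tilde\pi)$, which comes from the chain-rule decomposition. Your extra steps make explicit what the paper leaves implicit: you verify the decomposition, you add the mean value theorem step, and you identify the region where the hypothesis holds as $(0,(1+e^{H(\tilde\pi)})^{-1})$; the $\min$ with $\tfrac12$ is redundant, since $H(\tilde\pi)\ge 0$ already forces $(1+e^{H(\tilde\pi)})^{-1}\le\tfrac12$.
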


\begin{proof}
The condition $\mathrm{d}H/\mathrm{d}p > 0$ is equivalent to
$H(\tilde\pi) < \log[(1-p)/p]$.
At $p = 0.02$ (our worst collapsed checkpoint), $\log(0.98/0.02) = \log 49 \approx 3.89$
nats, accommodating a residual distribution concentrated on up to $e^{3.89} \approx 49$
vocabulary tokens. This condition holds whenever the model's ``wrong'' mass is
concentrated on near correct alternatives, which is the typical regime during SFT.
\qed
\end{proof}

When Proposition~\ref{prop:monotonicity} holds, higher entropy $\Rightarrow$ higher pass@1
$\Rightarrow$ lower degenerate group probability. This makes entropy a valid early warning
signal for impending collapse under the diagnostic of Section~\ref{sec:diagnostic}.

\paragraph{Label smoothing breaks monotonicity.}

\begin{remark}[Restoration paradox]\label{rem:paradox}
Label smoothing with parameter $\alpha$ replaces the SFT cross entropy target with
$(1-\alpha)\delta_c + \alpha \cdot \mathrm{Uniform}(\mathcal{V})$, pushing the output
distribution toward a mixture of the base distribution and uniform. This raises $H(\pi)$
by injecting mass uniformly across $\mathcal{V}$, independently of whether that mass
falls on the correct prefix token $c$. When the base distribution is already peaked on $c$
(as at 5.8 epochs), smoothing specifically reduces $\pi(c) = p$ while increasing
$H(\pi)$: entropy and pass@1 move in opposite directions, violating the monotonicity
condition of Proposition~\ref{prop:monotonicity}. This explains the restoration paradox
empirically observed in Section~\ref{sec:mitigation}: 5.8 epochs + LS achieves the highest pre RL
entropy in our study yet the lowest peak GRPO pass@10, because the injected entropy
reflects vocabulary diffusion rather than concentration on correct continuations.
Entropy is a valid pre RL diagnostic only when it reflects concentration on a small set
of plausible continuations, not diffusion over the full vocabulary.
\end{remark}

\subsection*{A.5. Additional Empirical Results}
\label{sec:appendix_tables}
\setlength{\floatsep}{4pt}

Table~\ref{tab:sft_selection_full} gives the full cross model SFT
ladder; Figure~\ref{fig:peak_by_depth} visualises peak GRPO
pass@10 and the diversity ratio across both models. We extend the
DeepSeek ladder to 9.6 epochs (SFT only, no GRPO) to test whether its
resistance to rank inversion is merely a matter of SFT depth. Even at
${\approx}1.6\times$ the deepest matched checkpoint, pre RL entropy
plateaus ($0.19$ to $0.28$ nats, never approaching the Qwen 5.8 epoch
value of $0.120$) and pass@1 saturates ($0.41$ to $0.43$), so the
cross model difference is not an artefact of under training DeepSeek.

\begin{table*}[!t]
\centering
\caption{Cross model SFT ladder. Entropy on 40 problem HumanEval+
probe; pre RL pass@$\{1,10,64\}$ at $T{=}1.0$ ($n{=}128$); GRPO
peak pass@10 (in training, $n{=}20$). Mean $\pm$ half range
across seeds 42, 123, 456 where 3 seed data exist. The DeepSeek
2.9 epoch arm is seed=42 only; the 7.7/8.6/9.6 epoch rows are
seed=42 SFT diagnostic only (no GRPO).
$^{\dagger}$Seed=42 only.}
\label{tab:sft_selection_full}
\small
\setlength{\tabcolsep}{5pt}
\begin{tabular}{llccccc}
\toprule
Model & Epochs & Entropy & p@1 & p@10 & p@64 & GRPO peak p@10 \\
\midrule
Qwen-3B & 1.0 & $0.227\pm0.045$ & $0.151\pm0.012$ & $0.663\pm0.010$ & $0.932\pm0.011$ & $0.806\pm0.028$ \\
Qwen-3B & 1.9 & $0.163\pm0.008$ & $0.163\pm0.008$ & $0.660\pm0.016$ & $0.901\pm0.003$ & $0.750\pm0.079$ \\
Qwen-3B & 2.9 & $0.156\pm0.026$ & $0.159\pm0.008$ & $0.655\pm0.009$ & $0.907\pm0.011$ & $0.671\pm0.031$ \\
Qwen-3B & 3.8 & $0.171\pm0.029$ & $0.173\pm0.008$ & $0.675\pm0.021$ & $0.902\pm0.015$ & $0.608\pm0.026$ \\
Qwen-3B & 5.8 & $0.120\pm0.019$ & $0.187\pm0.004$ & $0.700\pm0.014$ & $0.897\pm0.017$ & $0.481\pm0.044$ \\
\midrule
DeepSeek-6.7B & 1.0 & $0.399\pm0.043$ & $0.351\pm0.019$ & $0.838\pm0.022$ & n/a & $0.861\pm0.015$ \\
DeepSeek-6.7B & 1.9 & $0.274\pm0.012$ & $0.383\pm0.005$ & $0.860\pm0.004$ & n/a & $0.891\pm0.030$ \\
DeepSeek-6.7B & 2.9$^{\dagger}$ & $0.289\pm0.046$ & $0.394\pm0.007$ & $0.857\pm0.013$ & n/a & $0.889^{\dagger}$ \\
DeepSeek-6.7B & 3.8 & $0.285\pm0.030$ & $0.413\pm0.013$ & $0.852\pm0.014$ & n/a & $0.881\pm0.026$ \\
DeepSeek-6.7B & 5.8 & $0.185\pm0.022$ & $0.413\pm0.009$ & $0.854\pm0.003$ & n/a & $0.888\pm0.024$ \\
\midrule
DeepSeek-6.7B$^{\dagger}$ & 7.7 & 0.274 & 0.428 & 0.851 & n/a & n/a \\
DeepSeek-6.7B$^{\dagger}$ & 8.6 & 0.193 & 0.423 & 0.843 & n/a & n/a \\
DeepSeek-6.7B$^{\dagger}$ & 9.6 & 0.196 & 0.412 & 0.841 & n/a & n/a \\
\bottomrule
\end{tabular}
\end{table*}

\begin{figure}[H]
  \centering
  \includegraphics[width=\textwidth]{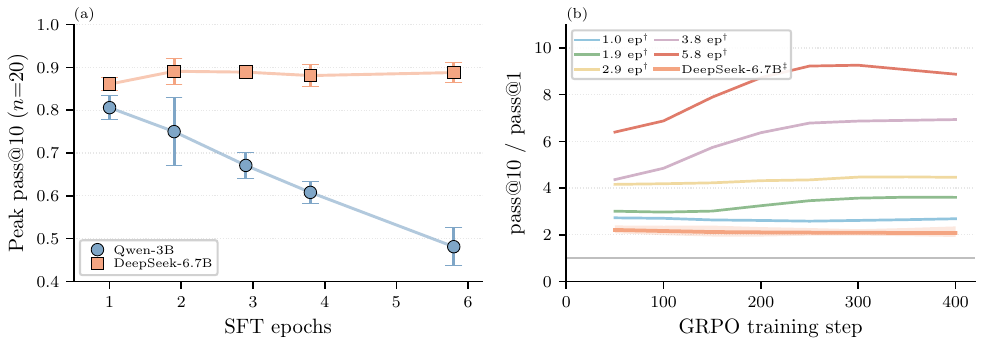}
  \caption{\textbf{Peak outcome and diversity collapse across both
    models.} (a)~Peak pass@10 by SFT depth (3 seed mean, min/max
    bars): Qwen-3B declines monotonically with SFT depth; DeepSeek-6.7B
    stays flat. (b)~Diversity ratio pass@10/pass@1 during GRPO:
    Qwen$^{\dagger}$ reaches $5.7$ to $8.8\times$ for deeper checkpoints
    (per checkpoint lines), with the 1.0 epoch curve staying
    $<2.7\times$; DeepSeek-6.7B$^{\ddagger}$ stays
    $\approx 2.1$ to $2.4\times$ across all GRPO arms (mean line,
    min/max band). $^{\dagger}$Qwen-3B; $^{\ddagger}$DeepSeek-6.7B.}
  \label{fig:peak_by_depth}
\end{figure}

\subsection*{A.6. Pre RL Predictor Correlations (Qwen Ladder)}
\label{app:predictor_corr}

Table~\ref{tab:correlations} reports Spearman rank correlations between
each pre RL predictor and peak GRPO pass@10 (in training $n{=}20$) across
all $n{=}15$ observations (5 checkpoints $\times$ 3 seeds).
Two analyses are shown: the raw pooled correlation and a seed demeaned
version (each observation centred by its seed mean) that isolates the
within seed checkpoint effect.  The significance threshold at $n{=}15$
is $|\rho|>0.521$ ($p{<}0.05$, two tailed). DeepSeek is excluded from
this analysis: pre RL pass@1 spans only $0.351$ to $0.413$ across the
four GRPO arms, leaving no within ladder variance to correlate against
peak pass@10.

\begin{table}[H]
\centering
\caption{Spearman $\rho$ vs.\ peak GRPO pass@10 ($n{=}15$ obs.; pre RL at
$T{=}1.0$, $n{=}128$). $\sigma^2_G = \hat{p}(1-\hat{p})(g-1)/g$.
Stage~2 drop $= (H(10){-}H(150))/H(10)$. Bold: $p{<}0.05$
($|\rho|{>}0.521$).}
\label{tab:correlations}
\small
\begin{tabular}{lrr}
\toprule
Predictor & Pooled & Seed demeaned \\
\midrule
\multicolumn{3}{l}{\textit{Pre RL (Stage 1)}} \\
Pre RL entropy      & $\mathbf{+0.63}$ & $\mathbf{+0.69}$ \\
Pre RL pass@1       & $\mathbf{-0.74}$ & $\mathbf{-0.75}$ \\
Pre RL pass@10      & $-0.47$          & $\mathbf{-0.54}$ \\
Pre RL pass@64      & $\mathbf{+0.60}$ & $\mathbf{+0.57}$ \\
$\sigma^2_G$        & $\mathbf{-0.74}$ & $\mathbf{-0.75}$ \\
\midrule
\multicolumn{3}{l}{\textit{Early GRPO (Stage 2)}} \\
Step 10 entropy $H(10)$   & $\mathbf{+0.76}$ & $\mathbf{+0.82}$ \\
Step 150 entropy $H(150)$ & $\mathbf{+0.75}$ & $\mathbf{+0.75}$ \\
Stage 2 drop              & $\mathbf{-0.67}$ & $\mathbf{-0.69}$ \\
\bottomrule
\end{tabular}
\end{table}

Five findings: (i)~\textbf{pre RL pass@1 is a strong \emph{inverse}
predictor} of GRPO outcome ($\rho{=}{-}0.75$, $p{<}0.01$): choosing the
checkpoint with the highest post SFT pass@1 selects the worst GRPO
performer. The negative sign on $\sigma^2_G$ reflects the same
effect: in the low $p$ regime ($0.151$ to $0.187$), $p(1{-}p)$ is
increasing in $p$, so higher pre RL $\sigma^2_G$ indexes deeper SFT
and predicts worse GRPO collapse, not better gradient signal;
(ii)~\textbf{pre RL entropy is a strong \emph{positive}
predictor} ($\rho{=}{+}0.69$, $p{<}0.01$ seed demeaned), matching the
theoretical mechanism (Proposition~\ref{prop:variance}); (iii)~pre RL
pass@64 is positively correlated ($\rho{=}{+}0.57$, $p{<}0.05$) but is
nearly flat across checkpoints ($0.897$ to $0.932$), limiting its practical
discriminating power; (iv)~pre RL pass@10 is inversely correlated
($\rho{=}{-}0.54$, $p{<}0.05$), reinforcing that standard pass@$k$
metrics at low $k$ select against GRPO trainability; and
(v)~\textbf{early GRPO entropy is an even stronger predictor than pre RL
entropy}: step 10 entropy $H(10)$ reaches $\rho{=}{+}0.82$
($p{<}0.001$), step 150 entropy $\rho{=}{+}0.75$, and the Stage~2
relative drop $\rho{=}{-}0.69$ ($p{<}0.01$).  The correlations use the
in training $n{=}20$ peak pass@10 as the outcome, the quantity the GRPO
training loop actually optimises.  The Stage~2 results in particular
validate the early monitoring diagnostic across all three seeds.

\end{document}